\theoremstyle{plain}
\newtheorem{theorem}{Theorem}[section]
\newtheorem{lemma}[theorem]{Lemma}
\theoremstyle{definition}
\theoremstyle{remark}
\icmltitlerunning{Beyond Ridge Regression for Distribution-Free Data}
\def\thetay{\hat{\theta}_{y}}
\def\thetaytag{\hat{\theta}_{y'}}
\def\probthetay{p_{\thetay}(y|x)}
\def\probthetaytag{p_{\thetaytag}(y'|x)}
\def\thetagenie{\hat{\theta}(\mathcal{D}_N;x,y)}
\def\probthetagenie{p_{\thetagenie}(y|x)}
\def\thetagenietag{\hat{\theta}(\mathcal{D}_N;x,y')}
\def\probthetagenietag{p_{\thetagenietag}(y'|x)}
\def\thetahat{\hat{\theta}}
\def\thetalamb{\hat{\theta}_\lambda}
\newcommand{\bnorm}[1]{\big\lVert#1\big\rVert}
\newcommand{\norm}[1]{||#1||}
\def\Tableref#1{Table~\ref{#1}}
\def\DN{\mathcal{D}_N} % Training set
\def\Plamb{P_\lambda}
\def\Klamb{K_\lambda}
\def\Klpnml{K_\textit{LpNML}}
\def\Klpnml2{K_\textit{LpNML2}}
\def\mulpnml{\hat{\mu}_\textit{\tiny{LpNML}}}
\def\sigmalpnml{\hat{\sigma}^2_\textit{\tiny{LpNML}}}
\def\figref#1{figure~\ref{#1}}
\def\Figref#1{Figure~\ref{#1}}
\def\secref#1{section~\ref{#1}}
\def\eqref#1{(\ref{#1})}
\def\1{\bm{1}}
\DeclareMathAlphabet{\mathsfit}{\encodingdefault}{\sfdefault}{m}{sl}
\SetMathAlphabet{\mathsfit}{bold}{\encodingdefault}{\sfdefault}{bx}{n}
\DeclareMathOperator*{\argmin}{arg\,min}
\newcommand{\ignore}[1]{}
\definecolor{green}{rgb}{0.05, 0.5, 0.06}
\definecolor{pink}{rgb}{0.91, 0.33, 0.5}
\begin{document}

\twocolumn[
% \icmltitle{The Predictive Normalized Maximum Likelihood with Luckiness for Ridge Regression}
\icmltitle{Beyond Ridge Regression for Distribution-Free Data}
% It is OKAY to include author information, even for blind
% submissions: the style file will automatically remove it for you
% unless you've provided the [accepted] option to the icml2021
% package.

% List of affiliations: The first argument should be a (short)
% identifier you will use later to specify author affiliations
% Academic affiliations should list Department, University, City, Region, Country
% Industry affiliations should list Company, City, Region, Country

% You can specify symbols, otherwise they are numbered in order.
% Ideally, you should not use this facility. Affiliations will be numbered
% in order of appearance and this is the preferred way.
\icmlsetsymbol{equal}{*}

\begin{icmlauthorlist}
\icmlauthor{Koby Bibas}{tau}
\icmlauthor{Meir Feder}{tau}
\end{icmlauthorlist}

\icmlaffiliation{tau}{Tel-Aviv University, Tel Aviv, Israel}
\icmlcorrespondingauthor{Koby Bibas}{kobybibas@gmail.com}
% \icmlcorrespondingauthor{Eee Pppp}{ep@eden.co.uk}

% You may provide any keywords that you
% find helpful for describing your paper; these are used to populate
% the "keywords" metadata in the PDF but will not be shown in the document
\icmlkeywords{Machine Learning, Linear Regression, Predictive Normalized Maximum Likelihood, Regret, Luckiness}

\vskip 0.3in
]

% this must go after the closing bracket ] following \twocolumn[ ...

% This command actually creates the footnote in the first column
% listing the affiliations and the copyright notice.
% The command takes one argument, which is text to display at the start of the footnote.
% The \icmlEqualContribution command is standard text for equal contribution.
% Remove it (just {}) if you do not need this facility.

\printAffiliationsAndNotice{}  % leave blank if no need to mention equal contribution
% \printAffiliationsAndNotice{\icmlEqualContribution} % otherwise use the standard text.

\begin{abstract}
In supervised batch learning, the predictive normalized maximum likelihood (pNML) has been proposed as the min-max regret solution for the distribution-free setting, where no distributional assumptions are made on the data.
However, the pNML is not defined for a large capacity hypothesis class as over-parameterized linear regression.
For a large class, a common approach is to use regularization or a model prior.
In the context of online prediction where the min-max solution is the Normalized Maximum Likelihood (NML), it has been suggested to use NML with ``luckiness'': A prior-like function is applied to the hypothesis class, which reduces its effective size.
Motivated by the luckiness concept, for linear regression we incorporate a luckiness function that penalizes the hypothesis proportionally to its $\ell_2$ norm.
This leads to the ridge regression solution.
The associated pNML with luckiness (LpNML) prediction deviates from the ridge regression empirical risk minimizer (Ridge ERM): When the test data reside in the subspace corresponding to the small eigenvalues of the empirical correlation matrix of the training data, the prediction is shifted toward 0.
Our LpNML reduces the Ridge ERM error by up to 20\% for the PMLB sets, and is up to 4.9\% more robust in the presence of distribution shift compared to recent leading methods for UCI sets.
\end{abstract}

\begin{figure*}[t]
\centering
\begin{subfigure}{0.49\textwidth}
\centering
    \includegraphics[width=\textwidth]{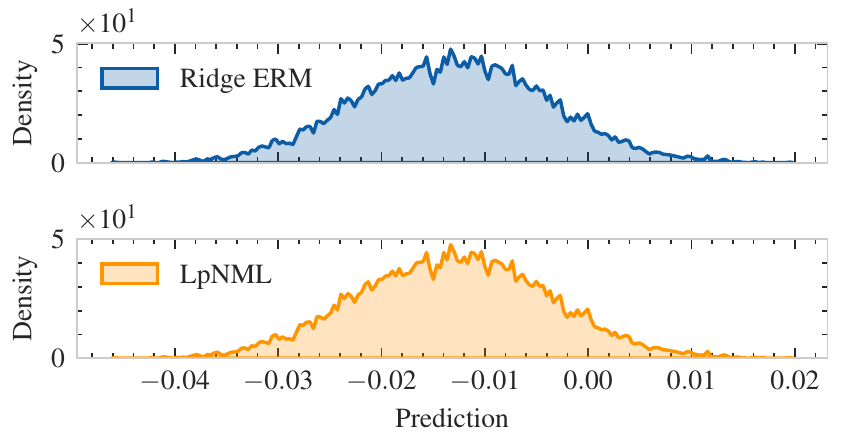}
    \vspace{-0.5cm}
    \caption{Greater than 0 eigenvalue subspace}
    \label{fig:lpnml_parallel}
\end{subfigure}
\begin{subfigure}{0.49\textwidth}
\centering
    \includegraphics[width=\textwidth]{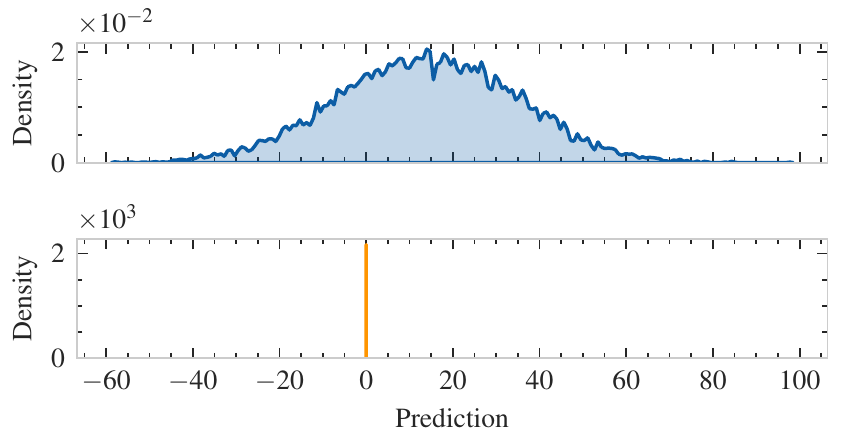}
    \vspace{-0.5cm}
    \caption{Equal to 0 eigenvalue subspace}
    \label{fig:lpnml_orthogonal}
\end{subfigure}
\caption{\textbf{Ridge ERM and LpNML prediction histograms.} Over-parameterized linear regression predictions of test samples that reside in the subspace that is spanned by the eigenvectors of the training data empirical correlation matrix corresponding to the eigenvalues that are (a) greater than 0, or (b) equal 0. In scenario (a), the LpNML prediction equals the Ridge ERM. On the other hand, in scenario (b), the LpNML prediction equals 0. See~\secref{sec:exp_lpnml_prediction} for more information. \label{fig:lpnml_prediction}}
\end{figure*}

\section{Introduction} \label{Introduction}
Ridge regression is a widely used method for linear regression when the data dimension is large compared to the training set size.
It has been applied in a large variety of domains such as econometrics~\cite{sengupta2020asymptotic}, bioinformatics~\cite{xu2020blood}, and social science~\cite{grimmer2021machine}.
From a Bayesian perspective, it coincides with the mean of the predictive distribution where the parameter prior and noise are Gaussian~\cite{DBLP:conf/iclr/LiuD20}.

The most popular variant is the \textit{Ridge empirical risk minimizer} (Ridge ERM): the model is chosen to minimize the training set loss and the ridge parameter is selected either to minimize a validation set or with the leave-one-out protocol, which leads to the same asymptotic performance as the optimally-tuned ridge estimator~\cite{hastie2019surprises}.

When using Ridge ERM, the underlining assumption is that there is a probabilistic relationship between the data and labels and between the training and test. 
In the \textit{stochastic setting}, see \citet{merhav1998universal}, it is assumed that the probabilistic relation between the test feature $x$ and its label $y$ is given by an (unknown) model from a given hypothesis class $P_\Theta$. 
For \textit{the probably approximately correct} (PAC) setting~\cite{valiant1984theory}, $x$ and $y$ are assumed to be generated by some source $P(x,y)=P(x)P(y|x)$ which is not necessarily a member of the hypothesis class.
These assumptions, however, may not hold in a real-world scenario.

% The individual setting
We consider the \textit{individual setting}~\citep{merhav1998universal} which is considered as the most general framework. In this setting, there is no assumption on a probabilistic mechanism that generates the training and test data.
Moreover, the relation between the data and labels can be deterministic and may even be determined by an adversary.

In the individual setting, the goal is to seek a learner that can compete with a reference learner, a \textit{genie}, with the following properties~\cite{fogel2019universal}: (i) knows the true test label value, (ii) is restricted to use a model from a given hypothesis class $P_\Theta$, and (iii) does not know which of the samples is the test. 
This genie chooses the model that minimizes the loss for the training set $\DN=\left\{(x_n,y_n)\right\}_{n=1}^N$ and the specific test sample $(x,y)$
\begin{equation}
\thetagenie  = \arg\min_{\theta \in \Theta} \bigg[ \sum_{n=1}^N \ell(p_\theta;x_n,y_n) + \ell(p_\theta;x,y) \bigg].
\end{equation}
When using the log-loss to evaluate the hypothesis performance $\ell(p_\theta,x,y) = - \log p_\theta(y|x)$, the \textit{regret} is defined as the loss difference between a learner $q$ and the genie
\begin{equation} \label{eq:genie_regret}
% R(\DN,x,y,q) = - \log q(y|x) + \log \probthetagenie.
R(\DN,x,y,q) = \log \frac{\probthetagenie}{q(y|x)}.
\end{equation}
% pNML
The \textit{predictive normalized maximum likelihood} (pNML) learner minimizes the regret for the worst-case test label 
\begin{equation} \label{eq:minmax_prob}
\Gamma = \min_q \max_y R(\DN,x,y,q).
\end{equation}
and its predictive distribution is~\cite{fogel2019universal}
\begin{equation} \label{eq:pNML}
q_{\mbox{\tiny{pNML}}}(y|x)=\frac{\probthetagenie}{\int \probthetagenietag dy'} .
\end{equation}
% The pNML regret is associated with the model complexity~\cite{zhang2012model}. This complexity measure formalizes the intuition that a model that fits every data pattern  would be much more complex than a model that provides a good fit to a small set.
% Thus, the pNML incorporates a trade-off between goodness of fit and model complexity as measured by the regret.
However, the pNML may not be defined for an \textit{over-parameterized} hypothesis class, where the number of parameters exceeds the training set size. The reason is that in the denominator of~\eqref{eq:pNML}, every possible value of the test label $y'$ can be perfectly fitted such that the integral diverges.

% LPNML
The pNML root lies in the \textit{normalized maximum likelihood} (NML) approach for online prediction~\cite{shtar1987universal}.
Since the NML may also be improper, a leading solution is \textit{NML with luckiness} (LNML)~\cite{roos2004mdl}: A luckiness function $w(\theta)$ is designed such that on sequences with small $w(\theta)$, we are prepared to incur large regret~\cite{grunwald2007minimum}. 
This is the equivalent to a Bayesian model prior.

% pNML with luckiness
In this paper, we apply the luckiness concept to the pNML and call it LpNML.
For linear regression, we design the luckiness function to be proportional to the $\ell_2$ model norm.
This leads to a genie that equals ridge regression.
We derive the corresponding LpNML and show its \textit{prediction}, i.e., the mean of the predictive distribution, differs from the Ridge ERM's:
When the test sample lies within the subspace associated with the small eigenvalues of the empirical correlation matrix of the training data, the prediction is shifted toward 0. This behavior is shown in~\figref{fig:lpnml_prediction}.

To summarize, we make the following contributions.
\begin{itemize}
    \item We introduce the LpNML: This is the first use of the luckiness concept with the pNML learner.
    \item We analytically derive the LpNML for linear regression with $\ell_2$ regularization.
    \item We analyze the difference between the LpNML and Ridge ERM prediction, showing that the LpNML prediction is shifted to 0 when the test sample is in a different subspace than the training data.
\end{itemize}
We demonstrate the LpNML attains a better mean square error (MSE) than Ridge ERM for 50 real-world PMLB sets~\cite{Olson2017PMLB}, reducing the error by up to 20\%. Furthermore, we show the LpNML outperforms leading methods for the distribution-shift benchmark~\cite{tripuraneni2020single}, when the test set differs from training.

\section{Related work} \label{sec:related_work}

\paragraph{The pNML.}
\citet{roos2008sequentially} were the first to derive the pNML to accelerate the NML calculation for sequential prediction. Later, \citet{fogel2019universal} have shown that this pNML learner attains the min-max regret for supervised batch learning with individual data.

The pNML was estimated for deep-neural-networks and its min-max regret was used as a confidence measure to detect out-of-distribution inputs~\cite{bibas2019deep,DBLP:conf/icml/ZhouL21,bibas2021single}.
\citet{pesso2021utilizing} altered the input image based on the pNML procedure to defend against adversarial attacks.

% \citet{bibas2019deep} estimated the pNML for deep neural networks (DNN) by fine-tuning the last layers of the model on every test input and label combination.
% \citet{DBLP:conf/icml/ZhouL21} introduced an accelerated method to approximate the pNML in DNN with Bayesian inference techniques.
% \citet{pesso2021utilizing} used the pNML to defend against adversarial attacks by altering the input image.

\citet{bibas2019new} have derived the pNML for \textit{under-parameterized} linear regression, where the number of parameters is smaller than the training set size. However, when using a large capacity hypothesis set as the over-parameterized linear regression we use in this paper, the pNML regret becomes infinite and cannot be used.
\citet{bibas2021distribution} tackled this issue by constraining the norm of hypotheses to be equal to the minimum norm solution. This paper takes a different approach by incorporating a prior function, which favors solutions with small norms.

\paragraph{The LNML.}
For online prediction with individual sequences, the min-max optimal regret is given by the NML~\cite{shtar1987universal,grunwald2007minimum}.
The prediction is performed for the entire sequence: Denote the sequence $y^N = \left\{y_n \right\}_{n=1}^N$, the NML probability assignment is 
\begin{equation} \label{eq:nml}
q_\textit{\tiny{NML}}(y^N) = \frac{\max_\theta p_\theta(y^N)}{\int \max_\theta p_\theta (y'^N) dy'^N}.
\end{equation}

Since NML may be improper, several treatments have been proposed. Among these treatments is setting a restriction on the range of data or the range of parameters~\cite{hirai2011efficient}.
The drawback of this method is that samples can fall outside of any valid restrictions.
A different approach is the LNML that is also named ``generalized NML''~\cite{roos2004mdl}:
A luckiness function $w(\theta)$ is set such that the sequence distribution becomes
\begin{equation} \label{eq:lnml}
q_\textit{\tiny{LNML}}(y^N) = \frac{ \max_\theta  p_\theta(y^N) w(\theta)}{\int \max_\theta p_\theta (y^N) w(\theta) dy^N}.
\end{equation}
The advantage of LNML is that there is large freedom in choosing the luckiness function: We can choose a function that has particularly pleasant properties~\cite{grunwald2007minimum}.
Choosing the luckiness function to equal a constant for example, reduces the LNML back to the NML.

% The integral value in equation~\eqref{eq:nml} denominator might not converge. To overcome it. 

% For online prediction with individual sequences, the min-max optimal regret universal code is given by the NML~\cite{shtar1987universal,grunwald2007minimum}. The NML is widely used in model selection and its
% properties, such as the upper bound of its statistical risk, have been previously demonstrated~\cite{DBLP:conf/isit/SuzukiY18}. 

% Since the NML may be improper several treatments were proposed. Among these treatments is restricting the range of data or the range of parameters~\cite{hirai2011efficient}.
% The drawback of this method is that samples can fall outside of any valid restrictions.
% A different approach is the LNML that is also named ``generalized NML''~\cite{roos2004mdl}.
% The advantage of LNML is that there is large freedom in choosing the luckiness function: We can choose a function that has particularly pleasant properties~\cite{grunwald2007minimum}.
% Choosing the luckiness function to equal a constant for example, reduces the LNML back to the NML.

\citet{miyaguchi2017normalized} derived the LNML for multivariate normal distributions with the conjugate prior luckiness function.
\citet{dwivedi2021revisiting} incorporated a luckiness to the LNML that is proportional to the model norm to find the best ridge regularization factor $\lambda$. 
As opposed to our approach, once $\lambda$ was found, the prediction equals the Ridge ERM.

\paragraph{Transductive prediction.}
A transductive inference uses unlabeled data to improve predictions of unlabeled examples. \citet{chapelle2000transductive} chose the test label values to minimize the leave-one-out error of ridge regression with both training and test data. \citet{cortes2007transductive} estimated the label of the unlabeled test data $x$ by using only the labeled neighbors of $x$. They also presented error bounds for the VC-dimension. \citet{alquier2012transductive} established risk bounds for a transductive version of the Lasso learner. \citet{lei2021near} developed a min-max linear estimator in the presence of a covariate shift, where the maximum is for the target domain learnable parameters.

Although all of the above have demonstrated empirical and theoretical benefits, many unlabeled test points need to be simultaneously available.
\citet{tripuraneni2020single} presented a single point transductive procedure for linear regression that improves the prediction root mean square deviation (RMSE), especially under a distribution shift.

\section{Preliminaries} \label{sec:preliminaries}
\paragraph{Linear regression setting.}
Given a training set
\begin{equation}
\DN = \{(x_n, y_n)\}_{n=1}^{N} , \quad x_n \in R^{M \times 1}, \quad y_n \in R,
\end{equation}
the goal is to predict the test label $y$ based on a new data sample $x$.
A common assumption is a linear relationship between the data and labels with an additive white noise
\begin{equation} \label{eq:linear}
    y_n = \theta^\top x_n + e_n, \qquad e_n \sim \mathcal{N}(0,\sigma^2).
\end{equation}
Denote the design matrix and the training set label vector
\begin{equation} \label{eq:trainset_matrix}
X_N = \begin{bmatrix} x_1 & \dots & x_N \end{bmatrix}^\top,  \quad 
Y_N = \begin{bmatrix} y_1 & \dots & y_N  \end{bmatrix}^\top
\end{equation}
where $X_N \in R^{N \times M} $ and $Y_N  \in R^{N \times 1}$,
the ERM solution which minimizes the log-loss of the training set (and is also the maximum likelihood estimator) is
\begin{equation} \label{eq:linear_regression_erm}
\hat{\theta} = (X_N^\top X_N)^{-1} X_N^\top Y_N.
\end{equation}

\paragraph{Ridge ERM.}
In regularized linear regression, leading to the Ridge ERM learner, the hypothesis class $\Theta$ is a sphere $\norm{\theta}^2_2\leq A$~\cite{ridgeregression}. 
Expressing this constraint with the Lagrangian:
\begin{equation}
% \mathcal{L}(\theta, \lambda)= \sum_{n=1}^N (y_n -\theta^\top x_n)^2 + \lambda \left( \norm{\theta}^2 - A \right),
\mathcal{L}(\theta, \lambda)= \norm{Y_N - X_N \theta}^2_2 + \lambda \left( \norm{\theta}^2_2 - A \right),
\end{equation}
the Ridge ERM learnable vector is
\begin{equation} \label{eq:ridge_erm}
\thetalamb = \left(X_N^\top X_N + \lambda I \right)^{-1} X_N^\top Y_N.
\end{equation}

\paragraph{Bayesian linear regression.} % 
Defining the Gaussian prior of the learnable parameters 
\begin{equation}
p(\theta) = \mathcal{N}\left(m_0,S_0 \right),
\end{equation}
the posterior distribution is (see \citet{deisenroth2020mathematics})
\begin{equation}
\begin{split}
    &p(\theta|D_N) = \mathcal{N} \left(\theta^\top m_N, S_N\right), \\ 
    & S_N = \big(S_0^{-1} + \sigma^{-2} X_N^\top X_N \big)^{-1}, \\ 
    & m_N = S_N \big(S_0^{-1} m_0 + \sigma^{-2} X_N^\top Y_N\big).
\end{split}
\end{equation}
To compute the predictive distribution with the posterior distribution, all plausible parameters are averaged according to the prior distribution. Setting $m_0=0$ and $S_0 = \sigma^{-2} \lambda I$, the predictive distribution is
\begin{equation}
\begin{split}
& q_\textit{\tiny{Bayesian}}(y|x) = \int p(y|x,\theta) p(\theta|\DN) d\theta.
\\ &=
\mathcal{N}\left(\thetalamb^\top x,\sigma^2\left[1+ x^\top \left(X_N^\top X_N + \lambda I \right)^{-1}  x \right] \right).
\end{split}
\end{equation}
The mean of the Bayesian learner predictive distribution equals Ridge ERM of~\eqref{eq:ridge_erm}.
The prediction uncertainty is encoded in the variance, and it is negatively correlated with the projection of the test sample on the design matrix.

\section{pNML with luckiness}
Inspired by the luckiness concept for NML, we define the genie, a learner that knows the true test label, with a luckiness function $w(\theta)$ as follows
\begin{equation}
\hat{\theta}_y = \argmin_{\theta \in \Theta} \bigg[ \sum_{n=1}^N \ell(p_\theta,x_n,y_n) + \ell(p_\theta,x,y) - \log w\left(\theta\right)  \bigg].
\end{equation}
The luckiness function is used as a model prior: The genie is more likely to select $\theta$ that yields a larger $w(\theta)$.
The related regret in this setting is
\begin{equation}
R(q,D_N,x,y,w) =  \frac{\log p_{\hat{\theta}_y} (y|x) w(\hat{\theta}_y)}{\log q(y|x)}.
\end{equation}

\begin{theorem}
The LpNML is the learner that minimizes the worst-case regret objective
\begin{equation}
   q_\textit{\tiny{LpNML}}(y|x) = \argmin_q \max_y R(q,D_N,x,y,w).
\end{equation}
The LpNML predictive distribution is
\begin{equation} \label{eq:lpnml}
q_\textit{\tiny{LpNML}}(y|x) = \frac{\probthetay w(\thetay)}{\int \probthetaytag w(\thetaytag) dy'}
\end{equation}
and its min-max regret is
\begin{equation}
\Gamma = \log \int \probthetaytag w(\thetaytag) dy'.
\end{equation}
\end{theorem}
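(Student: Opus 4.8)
The plan is to follow the classical normalized-maximum-likelihood template, carrying the luckiness weight $w(\theta)$ along as an extra multiplicative factor. The crux is that the proposed learner $q_\textit{\tiny{LpNML}}$ of~\eqref{eq:lpnml} is an \emph{equalizer strategy} with respect to the genie: it incurs exactly the same regret no matter which test label $y$ turns out to be the true one. Once that is shown, a one-line exchange argument proves that no competitor can have a smaller worst-case regret, so $q_\textit{\tiny{LpNML}}$ solves the min-max problem and the common regret value is the min-max regret $\Gamma$.

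First I would check that $q_\textit{\tiny{LpNML}}(\cdot|x)$ is a bona fide probability density in $y$: it is nonnegative since $\probthetay \ge 0$ and $w(\thetay)\ge 0$, and it integrates to $1$ by construction, provided the normalizer $Z:=\int \probthetaytag\, w(\thetaytag)\, dy'$ is finite. This finiteness is precisely what the luckiness function buys us — it is the analogue of the pNML being well defined, and in the over-parameterized regime it is exactly the point where the unweighted integral $\int \probthetaytag\, dy'$ would diverge. Assuming $Z<\infty$, substituting $q=q_\textit{\tiny{LpNML}}$ into the (luckiness-weighted) regret $R(q,\DN,x,y,w)$ gives, for every $y$,
\begin{equation}
R(q_\textit{\tiny{LpNML}},\DN,x,y,w)=\log\frac{\probthetay\, w(\thetay)}{q_\textit{\tiny{LpNML}}(y|x)}=\log Z ,
\end{equation}
because the factor $\probthetay\, w(\thetay)$ cancels against its occurrence in the denominator of~\eqref{eq:lpnml}. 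Hence $\max_y R(q_\textit{\tiny{LpNML}},\DN,x,y,w)=\log Z$.

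For optimality, let $q$ be any other density on $y$. Since $q$ and $q_\textit{\tiny{LpNML}}$ both integrate to $1$, we cannot have $q(y|x)\ge q_\textit{\tiny{LpNML}}(y|x)$ for all $y$ unless $q=q_\textit{\tiny{LpNML}}$ almost everywhere; so there is a label $y_0$ (on a set of positive measure) with $q(y_0|x)<q_\textit{\tiny{LpNML}}(y_0|x)$. For that label,
\begin{equation}
R(q,\DN,x,y_0,w)=\log\frac{p_{\hat{\theta}_{y_0}}(y_0|x)\, w(\hat{\theta}_{y_0})}{q(y_0|x)}>\log\frac{p_{\hat{\theta}_{y_0}}(y_0|x)\, w(\hat{\theta}_{y_0})}{q_\textit{\tiny{LpNML}}(y_0|x)}=\log Z ,
\end{equation}
so $\max_y R(q,\DN,x,y,w)>\log Z$. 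Therefore $q_\textit{\tiny{LpNML}}$ is the minimizer (unique up to a.e.\ equivalence) and $\Gamma=\log Z=\log\int\probthetaytag\, w(\thetaytag)\, dy'$, as claimed.

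The one genuinely delicate point — and the step I would treat most carefully — is well-posedness: that $\thetay$ is actually well defined (the penalized training-plus-test objective admits a unique minimizer, which holds here because the $\ell_2$ luckiness makes it strongly convex) and that $Z<\infty$, so the normalization is legitimate. Everything else is the standard ``NML is an equalizer and equalizers are min-max optimal'' argument, with $w(\theta)$ merely riding along through the cancellation.
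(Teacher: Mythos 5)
Your proof is correct and follows essentially the same route as the paper's: show that $q_\textit{\tiny{LpNML}}$ is an equalizer (constant regret $\log Z$ over all $y$), then argue that any competing density must under-assign probability to some label and hence suffer strictly larger worst-case regret. You merely spell out the normalization and well-posedness details (finiteness of $Z$, uniqueness of $\thetay$) that the paper's proof sketch leaves implicit.
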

\begin{proof}
This proof essentially follows that of~\citet{fogel2019universal} with the additional luckiness function:
The LpNML has a valid predictive distribution $\int q_\textit{LpNML}(y|x) dy = 1$.
The min-max regret of the LpNML is equal for all choices of $y$. If we consider a different predictive distribution, it should assign a smaller probability to at least one of the outcomes. If the true label is one of those outcomes, it will result in a greater regret. 
\end{proof}

Intuitively, the LpNML~\eqref{eq:lpnml} assigns a probability for a potential test label as follows: (i) Add the test sample to the training set with an arbitrary label $y'$, (ii) find the ERM solution with the defined luckiness function of this new set $\hat{\theta}_{y'}$, and (iii) take the probability it gives to the assumed label weighted by the luckiness function $p_{\hat{\theta}_{y'}}(y'|x) w(\thetaytag)$. Follow (i)-(iii)  for every possible test label value and normalize to get a valid predictive distribution. In the next section, we analytically derive the LpNML for ridge regression.

\subsection{LpNML for ridge regression}
We formulate the luckiness function as follows:
\begin{equation} \label{eq:luckiness_ridge}
w(\theta) = \exp\bigg\{-\frac{\lambda}{2\sigma^2}\norm{\theta}^2 \bigg\}.
\end{equation}
The genie learnable parameters using this luckiness function is the solution of the following minimization objective
\begin{equation}
\thetay  =
\argmin_\theta \bigg[ \sum_{n=1}^N \left(y_n - \theta^\top x_n \right)^2 + \left(y - \theta^\top x \right)^2 + \lambda \norm{\theta}^2\bigg].
\end{equation}
The solution is ridge regression which we express with the recursive least squares formulation~\cite{hayes19969}
\begin{equation} \label{eq:genie_rls}
\thetay  = \thetalamb + \frac{\Plamb x}{\Klamb}  \left(y - \thetalamb^\top x\right),
\end{equation}
where 
\begin{equation}
\Plamb \triangleq \left(X_N^\top X_N + \lambda I \right)^{-1}, \quad  \Klamb \triangleq 1+ x^\top \Plamb x.
\end{equation}

The goal is to analytically derive the LpNML predictive distribution \eqref{eq:lpnml} with the luckiness function of \eqref{eq:luckiness_ridge}. For this, we first have to determine the genie predictive distribution of the true test label.

\begin{lemma}
The genie predictive distribution weighted by the luckiness function of \eqref{eq:luckiness_ridge} is 
\begin{equation} \label{eq:lemma1}
\begin{split}
& \probthetay w(\thetay) = \\ &
\quad
\frac{c}{\sqrt{2\pi\sigma^2}}
\exp\left\{- \frac{1}{2 \sigmalpnml} \left(y-\thetalamb^\top x + \mulpnml \right)^2 
\right\}
\end{split}
\end{equation}
where 
\begin{equation} \label{eq:lpnml_mu_sigma}
\begin{split}
& \mulpnml \triangleq \frac{\lambda \Klamb \thetalamb^\top \Plamb x}{1+\lambda x^\top \Plamb^2 x}
, \quad
\sigmalpnml \triangleq \frac{\sigma^2 \Klamb^2}{1+\lambda x^\top \Plamb^2 x},
\\ & \qquad
c \triangleq 
\exp\bigg\{\frac{1}{2\sigma^2}\bigg[
\frac{\big(\lambda \thetalamb^\top \Plamb x\big)^2}{1+\lambda x^\top \Plamb^2 x}
- 
\lambda \norm{\thetalamb}^2 
\bigg]
\bigg\}.
\end{split}
\end{equation}
\end{lemma}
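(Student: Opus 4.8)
The plan is to substitute the recursive-least-squares expression \eqref{eq:genie_rls} for $\thetay$ into the weighted likelihood and then complete the square in $y$. Writing out the Gaussian likelihood together with the luckiness function \eqref{eq:luckiness_ridge},
\begin{equation}
\probthetay w(\thetay) = \frac{1}{\sqrt{2\pi\sigma^2}} \exp\left\{-\frac{1}{2\sigma^2}\Big[\big(y - \thetay^\top x\big)^2 + \lambda \norm{\thetay}^2\Big]\right\},
\end{equation}
so the whole task reduces to rewriting the bracketed quantity as an affine-shifted quadratic in $y$. To that end I would introduce the innovation $e \triangleq y - \thetalamb^\top x$, so that \eqref{eq:genie_rls} becomes $\thetay = \thetalamb + \Plamb x\, e/\Klamb$, and everything can be expanded in powers of $e/\Klamb$.

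The next step is two elementary expansions. Using the identity $x^\top \Plamb x = \Klamb - 1$, one gets $\thetay^\top x = \thetalamb^\top x + (\Klamb-1)e/\Klamb$, hence $y - \thetay^\top x = e/\Klamb$ and $(y-\thetay^\top x)^2 = e^2/\Klamb^2$. Expanding the norm term gives $\norm{\thetay}^2 = \norm{\thetalamb}^2 + 2 e\, \thetalamb^\top \Plamb x/\Klamb + e^2\, x^\top \Plamb^2 x/\Klamb^2$. Collecting the two contributions, the bracket equals $\frac{e^2}{\Klamb^2}\big(1 + \lambda x^\top \Plamb^2 x\big) + \frac{2\lambda e}{\Klamb}\thetalamb^\top \Plamb x + \lambda \norm{\thetalamb}^2$, which is a quadratic in the single scalar $e/\Klamb$ with leading coefficient $1 + \lambda x^\top \Plamb^2 x$.

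Finally I would complete the square in $e/\Klamb$: this produces the term $\big(1 + \lambda x^\top \Plamb^2 x\big)\big(e/\Klamb + \lambda \thetalamb^\top \Plamb x/(1+\lambda x^\top \Plamb^2 x)\big)^2$ plus the $e$-independent remainder $\lambda\norm{\thetalamb}^2 - (\lambda \thetalamb^\top \Plamb x)^2/(1 + \lambda x^\top \Plamb^2 x)$. Pulling a factor $\Klamb$ into the square converts $e/\Klamb$ back to $e = y - \thetalamb^\top x$ and identifies the shift with $\mulpnml$ as defined in \eqref{eq:lpnml_mu_sigma}; absorbing the leading coefficient into $2\sigma^2$ and recognizing $\sigmalpnml = \sigma^2 \Klamb^2/(1 + \lambda x^\top \Plamb^2 x)$ yields the $1/(2\sigmalpnml)$ prefactor in the exponent; and exponentiating $-1/(2\sigma^2)$ times the constant remainder yields precisely the constant $c$ in \eqref{eq:lpnml_mu_sigma}. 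Assembling these pieces gives \eqref{eq:lemma1}. I expect the only real obstacle to be bookkeeping — keeping the matrix factors $\Plamb$ and $\Plamb^2$ and the scalar $\Klamb$ straight, and consistently using $x^\top \Plamb x = \Klamb - 1$ to collapse the residual — rather than anything conceptual beyond completing the square.
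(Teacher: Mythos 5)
Your proposal is correct and follows essentially the same route as the paper's appendix proof: substitute the recursive-least-squares form of $\thetay$, use $x^\top \Plamb x = \Klamb - 1$ to collapse the residual to $(y-\thetalamb^\top x)/\Klamb$, expand the norm term, and complete the square in $y - \thetalamb^\top x$. The only difference is your notational shorthand $e$ for the innovation, which does not change the argument.
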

\begin{proof}
The genie probability assignment weighted by the luckiness function is
\begin{equation}
\begin{split}
&\probthetay w(\thetay) =
\\ &
\frac{1}{\sqrt{2\pi\sigma^2}}
\exp\left\{
-\frac{1}{2\sigma^2}
\left(y - \thetay^\top x \right)^2  
\right\}
\exp\left\{
-\frac{\lambda}{2\sigma^2} \norm{\thetay}^2
\right\}.
\end{split}
\end{equation}
Decomposing it with the recursive least squares formulation provides the result. A detailed proof is given in the appendix.
\end{proof}
$\mulpnml$ is the deviation of the genie prediction from the Ridge ERM prediction. 
If $\lambda=0$, the deviation is 0 and the genie prediction is equal to the ERM solution of~\eqref{eq:linear_regression_erm}.
The genie predictive distribution is not valid since $\int \probthetay dy>1$.
Next, we derive the LpNML by normalizing the genie predictive distribution weighted by the luckiness function.
\begin{theorem}
With the luckiness function of~\eqref{eq:luckiness_ridge}, the LpNML predictive distribution is
\begin{equation}
q_{\textit{\tiny{LpNML}}}(y|x) = \mathcal{N}\left(\thetalamb^\top x - \mulpnml, \sigmalpnml \right).
\end{equation}
% where the LpNML normalization factor and regret are
% \begin{equation} \label{eq:lpnml_regret}
% K_\textit{LpNML} = \frac{c \Klamb}{\sqrt{1 + \lambda x^\top \Plamb^2 x}}, \qquad \Gamma = \log K_\textit{LpNML}.
% \end{equation}
\end{theorem}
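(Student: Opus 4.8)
The plan is to get the statement almost immediately from the preceding lemma: once $\probthetay w(\thetay)$ has been put in the closed form of~\eqref{eq:lemma1}, the LpNML is just that expression, viewed as a function of the candidate label $y$, rescaled to integrate to one. So the first step is to substitute~\eqref{eq:lemma1} into the definition~\eqref{eq:lpnml} and record the crucial observation that $c$, $\mulpnml$ and $\sigmalpnml$ from~\eqref{eq:lpnml_mu_sigma} are functions of $\DN$ and $x$ only — they are assembled from $\thetalamb$, $\Plamb$, $\Klamb$ and $\lambda$ — so the entire dependence on $y$ sits in the Gaussian-shaped factor $\exp\{-\frac{1}{2\sigmalpnml}(y-\thetalamb^\top x+\mulpnml)^2\}$, which is an unnormalized $\mathcal{N}(\thetalamb^\top x-\mulpnml,\sigmalpnml)$ density.

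Next I would compute the normalizer $\int \probthetaytag w(\thetaytag)\,dy'$. Factoring out the $y'$-independent prefactor $c/\sqrt{2\pi\sigma^2}$ leaves a standard Gaussian integral equal to $\sqrt{2\pi\sigmalpnml}$. The one point that genuinely needs checking here — and, I expect, the only real content in this theorem beyond the lemma — is that this integral converges, i.e. that $\sigmalpnml$ is a strictly positive real. This follows from $\sigmalpnml = \sigma^2\Klamb^2/(1+\lambda x^\top\Plamb^2 x)$: since $\Plamb \succ 0$ and $\lambda>0$ the denominator is at least $1$, and $\Klamb = 1+x^\top\Plamb x \ge 1$, so $0<\sigmalpnml<\infty$. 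This is exactly the place where the unregularized over-parameterized pNML breaks down (there the analogous normalizer diverges), and it simultaneously shows that $\Gamma = \log\int\probthetaytag w(\thetaytag)\,dy'$ is finite.

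Finally I would divide: the prefactor $c/\sqrt{2\pi\sigma^2}$ cancels between numerator and denominator, leaving $q_{\textit{\tiny{LpNML}}}(y|x) = (2\pi\sigmalpnml)^{-1/2}\exp\{-\frac{1}{2\sigmalpnml}(y-(\thetalamb^\top x-\mulpnml))^2\}$, which is precisely the density of $\mathcal{N}(\thetalamb^\top x - \mulpnml,\ \sigmalpnml)$, as claimed. Thus the hard part is not in this theorem at all but in the lemma it rests on — the completion of the square via the recursive-least-squares identity~\eqref{eq:genie_rls} that produces~\eqref{eq:lemma1} — which is assumed here; granting that, the theorem reduces to a one-line normalization argument whose only subtlety is the positivity and finiteness of $\sigmalpnml$ noted above.
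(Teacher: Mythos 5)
Your proposal is correct and follows essentially the same route as the paper: substitute Lemma~1 into \eqref{eq:lpnml}, evaluate the normalizer as a Gaussian integral equal to $\frac{c}{\sqrt{2\pi\sigma^2}}\sqrt{2\pi\sigmalpnml}$, and divide so the $y$-independent prefactor cancels. Your additional remark that $\sigmalpnml$ is strictly positive and finite (since $\Plamb\succ 0$ and $\Klamb\ge 1$), guaranteeing convergence of the normalizing integral, is a small but worthwhile point of rigor that the paper leaves implicit.
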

\begin{proof}
Following \eqref{eq:lpnml}, to get the normalization factor we integrate the genie predictive distribution over all possible test label values. Utilizing lemma 1:
\begin{equation} \label{eq:nf_derivation}
\begin{split}
& K_\textit{\tiny{LpNML}} = \int_{- \infty}^{\infty} \probthetaytag w(\thetaytag) dy' 
\\ &=
\int_{-\infty}^{\infty} \frac{c}{\sqrt{2\pi\sigma^2}}
\exp\bigg\{-\frac{\big(y'-\thetalamb^\top x + \mulpnml \big)^2}{2 \sigmalpnml}  
\bigg\} dy'
\\ &=
\frac{c}{\sqrt{2\pi \sigma^2}} \sqrt{2\pi \sigmalpnml} = c \sqrt{\frac{\sigmalpnml}{\sigma^2}}.
\end{split}
\end{equation}
Dividing \eqref{eq:lemma1} by the normalization factor \eqref{eq:nf_derivation}, the LpNML predictive distribution is obtained.
\end{proof}
The LpNML prediction deviates from the Ridge ERM prediction by $\mulpnml$. Both $\mulpnml$ and the LpNML variance $\sigmalpnml$ depend on the test data $x$. 
When $\lambda=0$, the LpNML reduces to the pNML solution for under-parameterized linear regression that was derived by~\citet{bibas2019new}
\begin{equation}
    q_\textit{\tiny{pNML}} = \mathcal{N}\left(\thetahat^\top x, \sigma^2\left[1 + x\left(X_N^\top X_N \right)^{-1} x\right]^2\right).
\end{equation}
In the next section, we show that if the test data fall within the subspace corresponding to the small eigenvalues of the design matrix $X_N$, the deviation from Ridge ERM $\mulpnml$ is large which shifts the LpNML prediction toward 0.

% In this situation, the LpNML variance is also large and the confidence in the prediction is low. A more detailed analysis is provided in~\secref{sec:learnable_subpsace}. 

\subsection{The learnable subspace} \label{sec:learnable_subpsace}
Focusing on over-parameterized linear regression, we analyze the dissimilarity between the LpNML and the Ridge ERM predictions:
\begin{equation}
\mulpnml 
=
\frac{\lambda \Klamb  \thetalamb^\top \Plamb x}{1+\lambda x^\top \Plamb^2 x}   
=
\frac{\lambda \left(1+x^\top \Plamb x\right) \thetalamb^\top \Plamb x}{1+\lambda x^\top \Plamb^2 x}.
\end{equation}
Let $u_m$ and $h_m$ be the $m$-th eigenvector and eigenvalue of the design matrix such that
\begin{equation}
X_N^\top X_N = \sum_{m=1}^M  h_m^2 u_m u_m^\top,
\end{equation}
for over-parameterized linear regression ($M > N$)
\begin{equation}
\begin{split}
x^\top P_\lambda x &=
\sum_{m=1}^N \frac{\left(u_m^\top x\right)^2}{h_m^2 + \lambda}
+
\sum_{m=N+1}^M \frac{\left(u_m^\top x\right)^2}{\lambda},
\\ 
x^\top P_\lambda^2 x &=
\sum_{m=1}^N \frac{\left(u_m^\top x\right)^2}{\left(h_m^2 + \lambda\right)^2}
+
\sum_{m=N+1}^M \frac{\left(u_m^\top x\right)^2}{\lambda^2}.
\end{split}
\end{equation}
We analyze two cases: A case where $x$ falls in the \textit{largest eigenvalue subspace} and a case where $x$ lies in the \textit{smallest eigenvalue subspace}.

\paragraph{Largest eigenvalue subspace.}
For a test feature $x$ that lies in the subspace that is spanned by the eigenvectors that are associated with the large eigenvalues of the design matrix $x^\top \Plamb x \ll 1$. The deviation from the Ridge ERM is
\begin{equation} \label{eq:lpnml_mu_parallel}
\begin{split}
& \mulpnml
=
\frac{\lambda \left(1+x^\top \Plamb x\right) \thetalamb^\top \Plamb x}{1+\lambda x^\top \Plamb^2 x} 
\approx
\lambda \thetalamb^\top \Plamb x 
\\ & \quad =
\lambda Y_N^\top  X_N \Plamb^2 x 
=
\lambda Y_N^\top X_N \sum_{m=1}^N \frac{u_m u_m^\top x}{\left(h_m^2 + \lambda \right)^2}
\ll
1.
\end{split}
\end{equation}
The LpNML prediction is similar to the Ridge ERM.
The LpNML variance in this case is
\begin{equation} \label{eq:lpnml_var_parallel}
\sigmalpnml = \frac{\sigma^2 \Klamb^2}{1 + \lambda x^\top \Plamb^2 x}
=
\frac{\sigma^2 \left(1 + x^\top \Plamb x \right)^2}{1 + \lambda x^\top \Plamb^2 x } \approx \sigma^2.
\end{equation}
There is a high level of confidence in the prediction since this is the smallest possible variance.

\paragraph{Smallest eigenvalue subspace.}
For a test vector that lies in the subspace that is spanned by the eigenvectors corresponding to the smallest eigenvalue of the regularized empirical correlation matrix of the training data
\begin{equation}
x^\top \Plamb x =  \sum_{m=N+1}^M \frac{\left(u_m^\top x\right)^2}{\lambda^2} = \frac{1}{\lambda} ||x||^2.   
\end{equation}
For a small regularization term $\frac{1}{\lambda} ||x||^2 \gg 1$, the deviation from the Ridge ERM prediction is
\begin{equation} \label{eq:lpnml_mu_orthogonal}
\begin{split}
\mulpnml
&=
\frac{\lambda \left(1+ \frac{||x||^2}{\lambda} \right) \thetalamb^\top \frac{x}{\lambda}}{1+\lambda \frac{||x||^2}{\lambda^2}}  
\approx
\frac{\lambda \frac{||x||^2}{\lambda} \thetalamb^\top \frac{x}{\lambda}}{\lambda \frac{||x||^2}{\lambda^2}}   
= 
\thetalamb^\top  x.
\end{split}
\end{equation}
The correction term equals the Ridge ERM prediction thus the LpNML prediction is shifted to 0. 
The LpNML variance in this situation is
\begin{equation} \label{eq:lpnml_var_orthogonal}
\sigmalpnml =
\sigma^2
\frac{\left(1 + \frac{||x||^2}{\lambda}\right)^2}{1 + \lambda \frac{||x||^2}{\lambda^2} } 
=
\sigma^2 \left(1 + \frac{||x||^2}{\lambda}\right).
\end{equation}
Compared to \eqref{eq:lpnml_var_parallel}, the variance is large which reflects high uncertainty in the prediction.

In~\secref{sec:experiments}, we empirically show the behavior of the LpNML variance on a synthetic set and demonstrate that the LpNML deviation from the Ridge ERM improves its performance for real-world datasets.

\section{Experiments} \label{sec:experiments}
We demonstrate the LpNML prediction behavior for fitting a polynomial function to synthetic data and for fitting over-parameterized linear regression to a high-dimensional synthetic dataset. In addition, we show that the LpNML outperforms the Ridge ERM for real-world PMLB datasets and attains state-of-the-art performance for the distribution-shift benchmark
% ~\cite{tripuraneni2020single} from the UCI repository~\cite{hernandez2015probabilistic}.

\subsection{Polynomial fitting to synthetic data} \label{sec:exp_polinomial}
We sampled 6 training points $\left\{(t_n,y_n\right\}_{n=1}^6$ uniformly in the interval $[-1, 1]$.
and converted the data to features with a polynomial of degree 10 such that the design matrix is
\begin{equation} \label{eq:experiment_data_matrix}
X_N = 
\begin{bmatrix}
1 & t_1 & \dots & t_1^{10} \\
1 & t_2 & \dots & t_2^{10} \\
\vdots & \vdots & \ddots & \vdots \\
1 & t_6 & \dots & t_{6}^{10}
\end{bmatrix}.
% \begin{bmatrix}
% 1 & 1 & \dots & 1 \\
% t_1 & t_2 & \dots & t_6 \\
% t_1^2 & t_2^2 & \dots & t_6^2 \\
% \vdots & \vdots & \ddots & \vdots \\
% t_{1}^{10} & t_2^{10} & \dots & t_{6}^{10}
% \end{bmatrix}.
\end{equation}
Based on this training set, we performed Bayesian and LpNML prediction for all $t$ values in the interval $[-1,1]$. We set the regularization $\lambda$ to $10^{-3}$.

\Figref{fig:pnml_syntetic_data} shows the Ridge ERM, Bayesian and LpNML predictions.
The Bayesian learner has the same prediction as to the Ridge ERM which is different from the LpNML: The LpNML prediction is closer to 0 in intervals that lack training data, e.g., $t \leq 0.75$ and $0.9 \leq t$.

The confidence intervals are shown in \figref{fig:pnml_syntetic_data} with lighter colors. Both learners have large confidence intervals at the figure's edges and for $0.1 \leq t \leq 0.8$, where the training points are scarce. The LpNML has much larger confidence intervals at the interval edges than the Bayesian learner.

\begin{figure}[t]
\centering
\includegraphics[width=1.0\columnwidth]{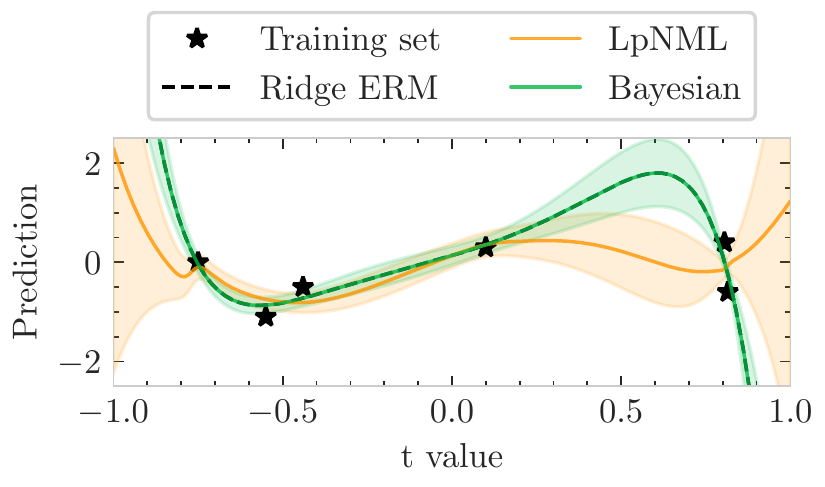}
\vspace{-0.5cm}
\caption{\textbf{Polynomial fitting to synthetic data.} 
The Ridge ERM, Bayesian and LpNML predictions fitted to the training data (in black stars) along with their confidence intervals. 
The LpNML largely deviates from the Ridge ERM in areas where the training data do not exist.
See~\secref{sec:exp_polinomial}.}
\label{fig:pnml_syntetic_data}
\end{figure}

\subsection{Parallel and orthogonal subspace} \label{sec:exp_lpnml_prediction}
We illustrate the LpNML behavior for test samples that lie in the largest eigenvalue subspace and smallest eigenvalue subspace as analyzed in~\secref{sec:learnable_subpsace}.

We created a synthetic dataset with $N=40$ training samples and $M=100$ features. We set the regularization term $\lambda$ to $10^{-9}$.
We randomly sampled $10,000$ test samples for each of the following two scenarios: The test sample data reside in the subspace that is spanned by the eigenvectors of the design matrix corresponding to the eigenvalues that are (a) greater than 0, or (b) equal 0.

\Figref{fig:lpnml_parallel} shows the histogram of Ridge ERM and LpNML predictions for test samples that reside in scenario (a).
The LpNML prediction for these test samples equals the Ridge ERM prediction which verifies equation~\eqref{eq:lpnml_mu_parallel} result.

\Figref{fig:lpnml_orthogonal} presents Ridge ERM and LpNML prediction histograms of samples from scenario (b).
The pNML predicts 0 while the Ridge ERM prediction varies between $-60$ and $100$, which is 2 order of magnitude larger than the prediction of scenario (a). The LpNML prediction of 0 is aligned with equation~\eqref{eq:lpnml_mu_orthogonal}: 
To avoid a large log-loss, the LpNML shifts the Ridge ERM prediction of test samples that differ from the training data to 0.

\begin{table*}[tbh!]
\centering
\caption{\textbf{Leave-one-out test performance for PMLB sets.} Ridge ERM and LpNML test MSE and the log-loss for Ridge ERM, Bayesian, and LpNML learners for real-world sets.
The evaluation was conducted using a training set size equal to the number of features ($M=N$). $\textcolor{green}{\blacktriangledown}$ and $\textcolor{pink}{\blacktriangle}$ indicate performance enhancement and degradation respectively. See~\secref{sec:exp_real_sets}.
}
\small
\begin{tabular}{lc|cc|ccc}
\toprule
               \hspace{0.7cm} Set name &  M & \thead{Ridge ERM \\ MSE} &  \thead{ LpNML \\ MSE} &       \thead{Ridge ERM \\ log-loss} &\thead{Bayesian \\ log-loss} & \thead{LpNML \\ log-loss} \\
\midrule
1199\_BNG\_echoMonths & 9 & 1.76 ± 0.55 & 1.41 ± 0.03$\textcolor{green}{\blacktriangledown20.0\%}$ & 173 ± 276 & 8.86 ± 9.83 & 2.64 ± 0.57$\textcolor{green}{\blacktriangledown6.22}$ \\
1089\_USCrime & 13 & 0.93 ± 0.02 & 0.86 ± 0.02$\textcolor{green}{\blacktriangledown8.30\%}$ & 1.76 ± 0.09 & 1.53 ± 0.04 & 1.40 ± 0.02$\textcolor{green}{\blacktriangledown0.14}$ \\
294\_satellite\_image & 36 & 0.90 ± 0.02 & 0.83 ± 0.02$\textcolor{green}{\blacktriangledown8.25\%}$ & 1.34 ± 0.01 & 1.34 ± 0.01 & 1.29 ± 0.01$\textcolor{green}{\blacktriangledown0.05}$ \\
banana & 2 & 2.84 ± 0.51 & 2.61 ± 0.35$\textcolor{green}{\blacktriangledown8.11\%}$ & 418 ± 101 & 296 ± 40.5 & 290 ± 40.5$\textcolor{green}{\blacktriangledown5.51}$ \\
195\_auto\_price & 15 & 1.28 ± 0.08 & 1.20 ± 0.08$\textcolor{green}{\blacktriangledown6.42\%}$ & 2.32 ± 0.28 & 1.71 ± 0.08 & 1.59 ± 0.15$\textcolor{green}{\blacktriangledown0.12}$ \\
695\_chatfield\_4 & 12 & 1.20 ± 0.07 & 1.13 ± 0.06$\textcolor{green}{\blacktriangledown5.67\%}$ & 1.73 ± 0.11 & 1.58 ± 0.08 & 1.44 ± 0.05$\textcolor{green}{\blacktriangledown0.14}$ \\
503\_wind & 14 & 1.13 ± 0.03 & 1.07 ± 0.03$\textcolor{green}{\blacktriangledown4.78\%}$ & 1.55 ± 0.03 & 1.51 ± 0.02 & 1.53 ± 0.12$\textcolor{pink}{\blacktriangle0.02}$ \\
560\_bodyfat & 14 & 0.59 ± 0.02 & 0.56 ± 0.02$\textcolor{green}{\blacktriangledown4.07\%}$ & 2.15 ± 0.35 & 1.32 ± 0.14 & 1.45 ± 0.23$\textcolor{pink}{\blacktriangle0.13}$ \\
659\_sleuth\_ex1714 & 7 & 1.69 ± 0.09 & 1.64 ± 0.09$\textcolor{green}{\blacktriangledown3.29\%}$ & 6.25 ± 1.89 & 3.00 ± 0.34 & 2.44 ± 0.19$\textcolor{green}{\blacktriangledown0.56}$ \\
344\_mv & 10 & 1.42 ± 0.09 & 1.38 ± 0.08$\textcolor{green}{\blacktriangledown2.89\%}$ & 7.71 ± 1.54 & 5.09 ± 1.36 & 4.64 ± 1.42$\textcolor{green}{\blacktriangledown0.45}$ \\
229\_pwLinear & 10 & 1.38 ± 0.04 & 1.35 ± 0.04$\textcolor{green}{\blacktriangledown2.25\%}$ & 3.32 ± 0.38 & 2.36 ± 0.18 & 2.23 ± 0.19$\textcolor{green}{\blacktriangledown0.13}$ \\
1027\_ESL & 4 & 1.72 ± 0.09 & 1.69 ± 0.08$\textcolor{green}{\blacktriangledown1.92\%}$ & 31.4 ± 5.82 & 24.7 ± 4.53 & 23.1 ± 4.49$\textcolor{green}{\blacktriangledown1.57}$ \\
653\_fri\_c0\_250\_25 & 25 & 1.16 ± 0.03 & 1.14 ± 0.03$\textcolor{green}{\blacktriangledown1.81\%}$ & 1.57 ± 0.05 & 1.53 ± 0.02 & 1.51 ± 0.02$\textcolor{green}{\blacktriangledown0.01}$ \\
230\_machine\_cpu & 6 & 13.1 ± 2.77 & 12.9 ± 2.72$\textcolor{green}{\blacktriangledown1.76\%}$ & 77.7 ± 79.8 & 8.31 ± 1.88 & 7.36 ± 1.97$\textcolor{green}{\blacktriangledown0.94}$ \\
1203\_BNG\_pwLinear & 10 & 1.41 ± 0.04 & 1.39 ± 0.04$\textcolor{green}{\blacktriangledown1.61\%}$ & 3.18 ± 0.49 & 2.33 ± 0.22 & 2.21 ± 0.23$\textcolor{green}{\blacktriangledown0.12}$ \\
561\_cpu & 7 & 13.2 ± 5.72 & 13.0 ± 5.69$\textcolor{green}{\blacktriangledown1.39\%}$ & 318 ± 266 & 12.4 ± 4.95 & 7.52 ± 3.06$\textcolor{green}{\blacktriangledown4.89}$ \\
564\_fried & 10 & 1.35 ± 0.03 & 1.34 ± 0.03$\textcolor{green}{\blacktriangledown1.33\%}$ & 4.07 ± 0.61 & 2.60 ± 0.28 & 2.31 ± 0.23$\textcolor{green}{\blacktriangledown0.29}$ \\
633\_fri\_c0\_500\_25 & 25 & 1.13 ± 0.02 & 1.12 ± 0.02$\textcolor{green}{\blacktriangledown1.24\%}$ & 1.54 ± 0.02 & 1.51 ± 0.01 & 1.51 ± 0.02\hphantom{00000} \\
598\_fri\_c0\_1000\_25 & 25 & 1.14 ± 0.02 & 1.12 ± 0.02$\textcolor{green}{\blacktriangledown1.20\%}$ & 1.57 ± 0.03 & 1.52 ± 0.02 & 1.51 ± 0.02$\textcolor{green}{\blacktriangledown0.01}$ \\
706\_sleuth\_case1202 & 6 & 2.50 ± 0.16 & 2.47 ± 0.16$\textcolor{green}{\blacktriangledown1.19\%}$ & 26.7 ± 9.12 & 11.1 ± 3.64 & 9.52 ± 3.61$\textcolor{green}{\blacktriangledown1.57}$ \\
651\_fri\_c0\_100\_25 & 25 & 1.10 ± 0.02 & 1.08 ± 0.02$\textcolor{green}{\blacktriangledown1.18\%}$ & 1.58 ± 0.04 & 1.52 ± 0.02 & 1.50 ± 0.02$\textcolor{green}{\blacktriangledown0.02}$ \\
635\_fri\_c0\_250\_10 & 10 & 1.41 ± 0.04 & 1.39 ± 0.04$\textcolor{green}{\blacktriangledown1.11\%}$ & 3.66 ± 0.86 & 2.42 ± 0.28 & 2.17 ± 0.20$\textcolor{green}{\blacktriangledown0.26}$ \\
656\_fri\_c1\_100\_5 & 5 & 2.59 ± 0.10 & 2.56 ± 0.10$\textcolor{green}{\blacktriangledown1.09\%}$ & 31.8 ± 7.88 & 19.1 ± 5.60 & 15.2 ± 4.72$\textcolor{green}{\blacktriangledown3.94}$ \\
1096\_FacultySalaries & 4 & 3.17 ± 0.35 & 3.14 ± 0.31$\textcolor{green}{\blacktriangledown1.06\%}$ & 107 ± 72.8 & 48.6 ± 20.9 & 13.6 ± 4.23$\textcolor{green}{\blacktriangledown35.0}$ \\
595\_fri\_c0\_1000\_10 & 10 & 1.34 ± 0.03 & 1.33 ± 0.03$\textcolor{green}{\blacktriangledown1.01\%}$ & 3.38 ± 0.42 & 2.41 ± 0.21 & 2.13 ± 0.16$\textcolor{green}{\blacktriangledown0.28}$ \\
1193\_BNG\_lowbwt & 9 & 1.48 ± 0.04 & 1.46 ± 0.04$\textcolor{green}{\blacktriangledown0.91\%}$ & 4.12 ± 0.94 & 3.09 ± 0.78 & 3.04 ± 0.80$\textcolor{green}{\blacktriangledown0.05}$ \\
650\_fri\_c0\_500\_50 & 50 & 1.06 ± 0.02 & 1.05 ± 0.02$\textcolor{green}{\blacktriangledown0.88\%}$ & 1.46 ± 0.01 & 1.45 ± 0.01 & 1.45 ± 0.01\hphantom{00000} \\
666\_rmftsa\_ladata & 10 & 2.73 ± 0.24 & 2.71 ± 0.24$\textcolor{green}{\blacktriangledown0.87\%}$ & 4.63 ± 2.25 & 2.87 ± 0.75 & 2.48 ± 0.32$\textcolor{green}{\blacktriangledown0.39}$ \\
1028\_SWD & 10 & 1.40 ± 0.03 & 1.38 ± 0.03$\textcolor{green}{\blacktriangledown0.86\%}$ & 5.56 ± 2.52 & 3.85 ± 1.42 & 2.89 ± 0.66$\textcolor{green}{\blacktriangledown0.96}$ \\
192\_vineyard & 2 & 11.5 ± 1.56 & 11.4 ± 1.45$\textcolor{green}{\blacktriangledown0.85\%}$ & 722 ± 212 & 698 ± 212 & 669 ± 211$\textcolor{green}{\blacktriangledown29.2}$ \\
225\_puma8NH & 8 & 1.48 ± 0.04 & 1.47 ± 0.03$\textcolor{green}{\blacktriangledown0.82\%}$ & 5.32 ± 0.77 & 3.17 ± 0.33 & 2.82 ± 0.39$\textcolor{green}{\blacktriangledown0.34}$ \\
634\_fri\_c2\_100\_10 & 10 & 1.47 ± 0.04 & 1.46 ± 0.04$\textcolor{green}{\blacktriangledown0.78\%}$ & 3.68 ± 1.54 & 2.39 ± 0.35 & 1.93 ± 0.10$\textcolor{green}{\blacktriangledown0.46}$ \\
657\_fri\_c2\_250\_10 & 10 & 1.47 ± 0.05 & 1.46 ± 0.05$\textcolor{green}{\blacktriangledown0.67\%}$ & 4.59 ± 2.42 & 2.25 ± 0.21 & 2.20 ± 0.25$\textcolor{green}{\blacktriangledown0.05}$ \\
603\_fri\_c0\_250\_50 & 50 & 1.06 ± 0.02 & 1.06 ± 0.02$\textcolor{green}{\blacktriangledown0.49\%}$ & 1.45 ± 0.01 & 1.45 ± 0.01 & 1.45 ± 0.01\hphantom{00000} \\
624\_fri\_c0\_100\_5 & 5 & 2.11 ± 0.11 & 2.10 ± 0.10$\textcolor{green}{\blacktriangledown0.39\%}$ & 31.4 ± 11.8 & 16.7 ± 10.5 & 8.71 ± 3.49$\textcolor{green}{\blacktriangledown8.03}$ \\
606\_fri\_c2\_1000\_10 & 10 & 1.52 ± 0.05 & 1.52 ± 0.05$\textcolor{green}{\blacktriangledown0.35\%}$ & 4.53 ± 2.00 & 3.95 ± 1.95 & 3.29 ± 1.88$\textcolor{green}{\blacktriangledown0.66}$ \\
579\_fri\_c0\_250\_5 & 5 & 2.00 ± 0.07 & 2.00 ± 0.07$\textcolor{green}{\blacktriangledown0.22\%}$ & 22.0 ± 4.73 & 11.7 ± 3.53 & 6.85 ± 1.61$\textcolor{green}{\blacktriangledown4.83}$ \\
648\_fri\_c1\_250\_50 & 50 & 1.08 ± 0.02 & 1.08 ± 0.02$\textcolor{green}{\blacktriangledown0.20\%}$ & 1.47 ± 0.02 & 1.46 ± 0.01 & 1.46 ± 0.01\hphantom{00000} \\
1191\_BNG\_pbc & 18 & 1.23 ± 0.03 & 1.23 ± 0.03$\textcolor{green}{\blacktriangledown0.19\%}$ & 2.10 ± 0.53 & 2.02 ± 0.52 & 1.86 ± 0.46$\textcolor{green}{\blacktriangledown0.16}$ \\
618\_fri\_c3\_1000\_50 & 50 & 1.10 ± 0.03 & 1.09 ± 0.03$\textcolor{green}{\blacktriangledown0.17\%}$ & 1.47 ± 0.01 & 1.47 ± 0.01 & 1.47 ± 0.01\hphantom{00000} \\
631\_fri\_c1\_500\_5 & 5 & 2.38 ± 0.09 & 2.38 ± 0.09$\textcolor{green}{\blacktriangledown0.14\%}$ & 38.2 ± 10.0 & 24.0 ± 6.32 & 24.7 ± 8.23$\textcolor{pink}{\blacktriangle0.67}$ \\
583\_fri\_c1\_1000\_50 & 50 & 1.06 ± 0.02 & 1.06 ± 0.02$\textcolor{green}{\blacktriangledown0.14\%}$ & 1.45 ± 0.01 & 1.45 ± 0.01 & 1.45 ± 0.01\hphantom{00000} \\
586\_fri\_c3\_1000\_25 & 25 & 1.18 ± 0.03 & 1.18 ± 0.03$\textcolor{green}{\blacktriangledown0.13\%}$ & 1.69 ± 0.23 & 1.64 ± 0.19 & 1.52 ± 0.02$\textcolor{green}{\blacktriangledown0.12}$ \\
542\_pollution & 15 & 1.32 ± 0.05 & 1.32 ± 0.05$\textcolor{green}{\blacktriangledown0.09\%}$ & 1.85 ± 0.15 & 1.60 ± 0.02 & 1.57 ± 0.02$\textcolor{green}{\blacktriangledown0.04}$ \\
687\_sleuth\_ex1605 & 5 & 2.17 ± 0.09 & 2.17 ± 0.09$\textcolor{green}{\blacktriangledown0.08\%}$ & 15.1 ± 3.06 & 7.66 ± 1.52 & 6.06 ± 1.31$\textcolor{green}{\blacktriangledown1.60}$ \\
645\_fri\_c3\_500\_50 & 50 & 1.09 ± 0.03 & 1.09 ± 0.03$\textcolor{green}{\blacktriangledown0.05\%}$ & 1.47 ± 0.02 & 1.46 ± 0.02 & 1.46 ± 0.02\hphantom{00000} \\
623\_fri\_c4\_1000\_10 & 10 & 1.55 ± 0.05 & 1.54 ± 0.05$\textcolor{green}{\blacktriangledown0.04\%}$ & 2.64 ± 0.52 & 2.01 ± 0.12 & 1.99 ± 0.13$\textcolor{green}{\blacktriangledown0.02}$ \\
622\_fri\_c2\_1000\_50 & 50 & 1.08 ± 0.02 & 1.08 ± 0.02$\textcolor{green}{\blacktriangledown0.04\%}$ & 1.46 ± 0.01 & 1.46 ± 0.01 & 1.46 ± 0.01\hphantom{00000} \\
658\_fri\_c3\_250\_25 & 25 & 1.18 ± 0.03 & 1.18 ± 0.03\hphantom{0000000} & 1.54 ± 0.02 & 1.52 ± 0.02 & 1.52 ± 0.02\hphantom{00000} \\
201\_pol & 48 & 1.07 ± 0.02 & 1.07 ± 0.02\hphantom{0000000} & 1.45 ± 0.01 & 1.45 ± 0.01 & 1.45 ± 0.01\hphantom{00000} \\
\bottomrule
\end{tabular}

\label{tab:real_data}
\end{table*}

\subsection{Leave on out real data performance} \label{sec:exp_real_sets}
We evaluated the LpNML for 50 real-world datasets from the PMLB repository~\cite{Olson2017PMLB}.
To tune the ridge parameter $\lambda$ and the variance $\sigma^2$, we executed the leave-one-out procedure: We constructed with $N$ samples $N$ sets, each set was divided into $N-1$ training samples and a single validation sample for which we optimized $\lambda$ and $\sigma^2$ of Ridge ERM, Bayesian, and LpNML learners.
The average of the $N$ values of $\lambda$ and $\sigma^2$ were used to predict the test labels.
We repeated this procedure for different train-test splits to compute the 95\% confidence intervals.
The MSE reduction is measured in percentage
\begin{equation}
100 \times \left( 1 - \frac{MSE_\textit{LpNML}(x,y)}{MSE_\textit{Ridge-ERM}(x,y)}\right)
\end{equation}
and the log-loss reduction is measured with subtraction
\begin{equation}
\ell(q_\textit{\tiny{Bayesian}},x,y) - \ell(q_\textit{\tiny{LpNML}},x,y).
\end{equation} 

\Tableref{tab:real_data} shows the test MSE and test log-loss. The LpNML outperforms the Ridge ERM for 48 of 50 sets.
The mean and median MSE reductions are 2.03\%  and  0.96\%, respectively.
The largest MSE reduction is 20.0\% for the 1199\_BNG\_echoMonths set.
For the log-loss metric, the LpNML outperforms the other learners in 37 out of 50 sets by a mean value of 2.17 and median value of 0.13, while degrading the log-loss of only 3 sets.

Overall, the LpNML has a smaller regularization term: For the 1199\_BNG\_echoMonths dataset, the LpNML has 1.45 lower $\lambda$. This may explain the better performance: For the interpolation region, the small $\lambda$ of the LpNML means a better fit to the test sample.
For the extrapolation region: Although the LpMNL has a smaller $\lambda$, the LpNML prediction is shifted toward 0, and therefore the LpNML test MSE is smaller than the Ridge ERM MSE.

\begin{table*}[hbt!]
\centering
\caption{\textbf{Distribution-shift benchmark.} Test RMSE for real-world datasets with a distribution shift between training and test, except the Triazines set that was randomly split. The LpNML outperforms recent leading methods for 4 of 5 datasets. LpNML is the second-best method for the Fertility set. For more details see~\secref{sec:dist_shift}.
}
\label{table:dist_shift}
% \small
\begin{tabular}{lccccc}
\toprule
TriazinesMethod                    & Wine         & Parkinson                & Fire             & Fertility                & Triazines \\ 
\toprule
OLS                                 & 1.012$\pm$0.016       & 12.792$\pm$0.149                  & 82.715$\pm$35.514         & 0.399$\pm$0.066                   & 0.172$\pm$0.037   \\
Ridge ERM                           & 0.994$\pm$0.015       & 12.527$\pm$0.145                  & 82.346$\pm$35.595         & 0.399$\pm$0.066                   & 0.147$\pm$0.028  \\
\citet{chapelle2000transductive}    & 0.841$\pm$0.001       & 12.253$\pm$0.002                  & 82.066$\pm$ 2.567          & 0.409$\pm$0.013                  & 0.173$\pm$0.001   \\
\citet{cortes2007transductive}      & 0.834$\pm$0.015       & 12.333$\pm$0.145                  & 81.947$\pm$35.834         & \textbf{0.385$\pm$0.076}          & 0.151$\pm$0.024  \\ 
\citet{alquier2012transductive}     & 0.981$\pm$0.015       & 12.253$\pm$0.136                  & 82.066$\pm$36.032         & 0.409$\pm$0.072                   & 0.148$\pm$0.024  \\
\citet{tripuraneni2020single}       & 0.770$\pm$0.014       & 12.089$\pm$0.137                  & 81.979$\pm$35.787         & 0.398$\pm$0.065                   & 0.151$\pm$0.024   \\
\citet{dwivedi2021revisiting}       & 0.929$\pm$0.015       & 12.693$\pm$0.147                  & 82.634$\pm$35.533         & 0.407$\pm$0.071                   & 0.166$\pm$0.021  \\
LpNML (ours)                        & \textbf{0.732$\pm$0.014}       & \textbf{12.027$\pm$0.142}    & \textbf{81.918$\pm$35.923}        & 0.398$\pm$0.067       & \textbf{0.147$\pm$0.024}   \\ 
\bottomrule
\end{tabular}
\end{table*}

\begin{figure*}[ht]
    \centering
    \includegraphics[width=\textwidth]{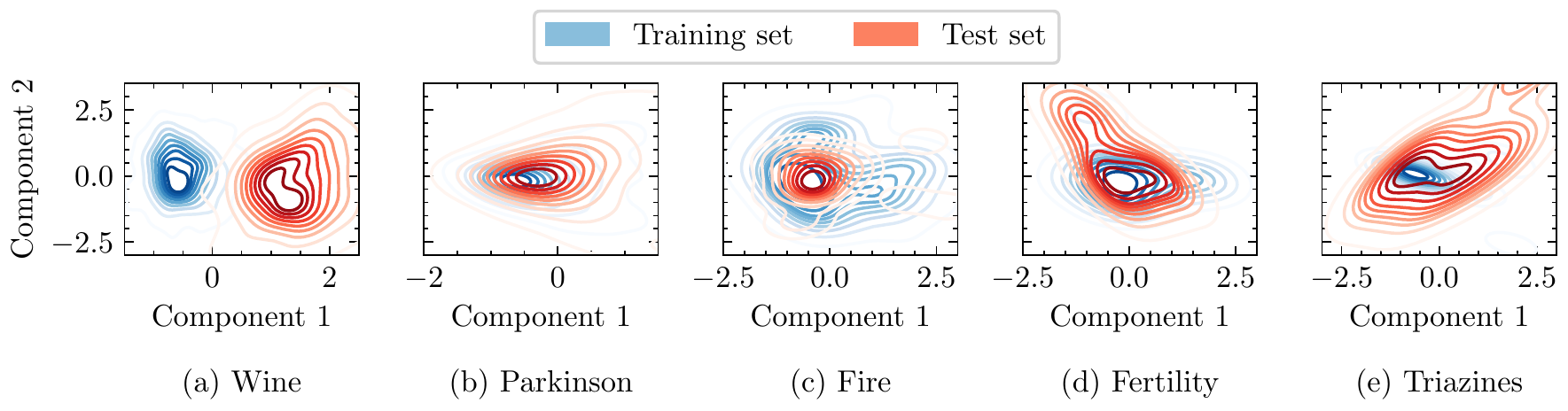}
     \vspace{-0.6cm}
    \caption{\textbf{PCA for the distribution-shift benchmark}. In the Wine dataset, where LpNML outperforms the leading method most, the train-test split is most visually different. For more information see~\secref{sec:dist_shift}.}
    \label{fig:dist_shift}
\end{figure*}

\subsection{Distribution-shift benchmark} \label{sec:dist_shift}
We followed the benchmark that was proposed by~\citet{tripuraneni2020single}: Four datasets from the UCI repository~\cite{Dua:2019} were chosen and split such that the test data contain a distribution shift from the training. 
The fifth dataset (Triazines) does not include a distribution shift. 
The train-test split was performed randomly. A detailed explanation of the train-test split is provided in the appendix. This benchmark was evaluated using the RMSE metric and the hyperparameters were optimized using the leave-one-out procedure.

The LpNML attains a smaller RMSE for four sets. The largest improvement is for the Wine set for which the LpNML reduces the RMSE of~\citet{tripuraneni2020single} by 4.93\%.
For the Fertility set, the LpNML is the second-best following the method of~\citet{cortes2007transductive}, which has 3.27\% smaller RMSE.
For the Triazines set that does not contain a distribution shift, the LpNML performs the same as the Ridge ERM and outperforms the other methods.

\Figref{fig:dist_shift} shows the principal component analysis (PCA) with 2 components of the benchmark sets.
For the Wine set in \figref{fig:dist_shift}a, the difference between the train and test data is the most visually seen. For this set, the LpNML has the largest RMSE reduction over the Ridge ERM: an RMSE reduction of 26.36\%.
For the Parkinson, Fire, Fertility, and Triazines datasets, the LpNML reduces the Ridge ERM RMSE by 3.99\%, 0.52\%, 0.25\%, 0.0\% respectively, which is correlated to the degree to which the train-test splits are visually separated.

\FloatBarrier
\section{Conclusions} \label{sec:conclusion}
In this work, we introduced the LpNML by incorporating a luckiness function to the min-max regret objective.
For ridge regression, where we defined the luckiness function as the Gaussian prior, we have shown that the LpNML prediction is shifted toward 0 for a test vector that resides in the subspace that is associated with the small eigenvalues of the design matrix.
For real-world datasets, the LpNML attains up to 20\% better test MSE than Ridge ERM and for the distribution-shift benchmark, the LpNML reduces the error of recent leading methods by up to 4.93\%.

We believe that our approach can be valuable to fields that use linear regression and require high-precision prediction.
For future work, our LpNML framework can be extended with more luckiness functions such as $\ell_1$ by defining the luckiness function to be the Laplace prior. 

% In this work, 
% We studied the problem of ridge regression in the distribution-free setting. 
% We introduced the LpNML by incorporating a luckiness function to the min-max regret objective.
% We defined the luckiness function as the Gaussian prior and showed that the LpNML prediction is shifted toward 0 for a test vector that resides in the subspace that is associated with the small eigenvalues of the design matrix.
% % For ridge regression, where we defined the luckiness function as the Gaussian prior, we have shown that the LpNML prediction is shifted toward 0 for a test vector that resides in the subspace that is associated with the small eigenvalues of the design matrix.
% For real-world datasets, the LpNML attains up to 20\% better test MSE than Ridge ERM and for the distribution-shift benchmark, the LpNML reduces the error of recent leading methods by up to 4.93\%.

% We believe that our approach can be valuable to fields that use linear regression and require high-precision prediction.
% For future work, our LpNML framework can be extended with more luckiness functions such as $\ell_1$ by defining the luckiness function to be the Laplace prior. 

\bibliography{main.bib}
\bibliographystyle{icml2022}

\newpage
\appendix
\onecolumn

\section{Lemma 1 proof}
The genie probability assignment weighted by the luckiness function is
\begin{equation} \label{eq:genie_prob_derivation}
\begin{split}
&\probthetay w(\thetay) 
=
\frac{1}{\sqrt{2\pi\sigma^2}}
\exp\left\{
-\frac{1}{2\sigma^2}
\left(y - \thetay^\top x \right)^2  
\right\}
% \\ & \qquad \quad \ \ \cdot
\exp\left\{
-\frac{\lambda}{2\sigma^2} \norm{\thetay}^2
\right\}.
\end{split}
\end{equation}
The first exponent argument using the recursive least squares formulation~\eqref{eq:genie_rls}:
\begin{equation}
\begin{split}
\big(y - \thetay^\top x \big)^2  
&= 
\left[y
- \thetalamb^\top x
-
\frac{1}{\Klamb} x^\top \Plamb \big(y - \thetalamb^\top x\big)x \right]^2 =
\left[ \bigg(1 -\frac{x^\top \Plamb x}{\Klamb}\bigg) y 
-
\left(
1
-
\frac{x^\top \Plamb x}{\Klamb}
\right)
\thetalamb^\top x \right]^2
=
\frac{1}{\Klamb^2}\big(y - \thetalamb^\top x \big)^2.
\end{split}
\end{equation}
Substituting it back to~\eqref{eq:genie_prob_derivation} and using again the recursive least squares
\begin{equation}
\begin{split}
& \probthetay  w(\thetay) = 
\frac{1}{\sqrt{2\pi\sigma^2}}  \exp\bigg\{ 
-\frac{\big(y - \thetalamb^\top x\big)^2}{2 \sigma^2 \Klamb^2} 
-\frac{\lambda}{2\sigma^2} \bnorm{\thetalamb + \frac{\Plamb x}{\Klamb}  \big(y - \thetalamb^\top x\big)}^2
\bigg\}.
\end{split}
\end{equation}
Deriving the exponential argument:
\begin{equation}
\begin{split}
\frac{1}{\Klamb^2}
\big(y - \thetalamb^\top x\big)^2
+\lambda \bnorm{ \thetalamb + \frac{\Plamb x}{\Klamb}  \big(y - \thetalamb^\top x\big)}^2
&= 
\frac{1 + \lambda x^\top \Plamb^2 x}{\Klamb^2} \big(y - \thetalamb^\top x\big)^2
+ 
2\frac{\lambda \thetalamb^\top \Plamb x}{\Klamb} \big(y - \thetalamb^\top x\big)
+ 
\lambda \norm{\thetalamb}^2  
\\ &=
\frac{1+\lambda x^\top \Plamb^2 x}{\Klamb^2}
\big(y - \thetalamb^\top x +  \frac{\lambda \Klamb \thetalamb^\top \Plamb x}{1+\lambda x^\top \Plamb^2 x} \big)^2 
-
\frac{\big(\lambda \thetalamb^\top \Plamb x\big)^2}{1+\lambda x^\top \Plamb^2 x}
+ \lambda \norm{\thetalamb}^2 .
\end{split}
\end{equation}
Substitute in \eqref{eq:genie_prob_derivation} proves the lemma.

\begin{figure}[bth]
\centering
\begin{subfigure}{0.49\textwidth}
\centering
    \includegraphics[width=\textwidth]{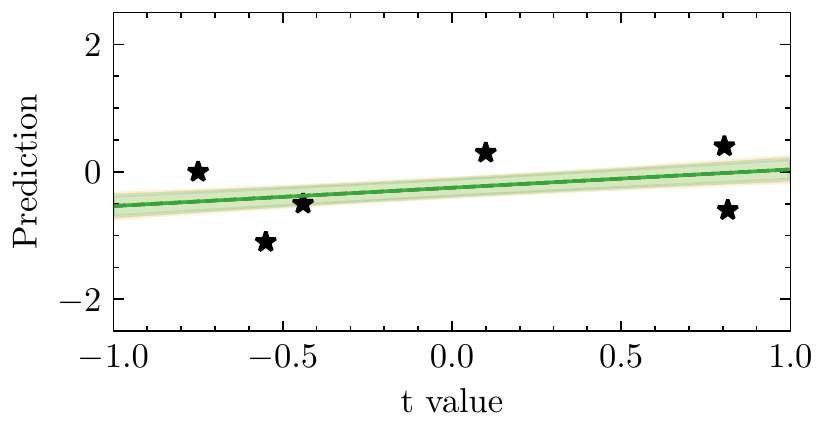}
    \vspace{-0.5cm}
    \caption{$M=2$}
    \label{fig:m_2}
\end{subfigure}
\begin{subfigure}{0.49\textwidth}
\centering
    \includegraphics[width=\textwidth]{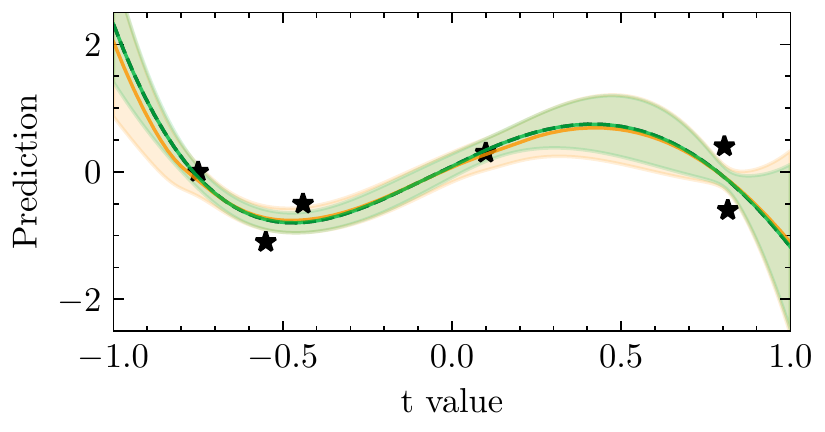}
    \vspace{-0.5cm}
    \caption{$M=5$}
    \label{fig:m_5}
\end{subfigure}
\begin{subfigure}{0.49\textwidth}
\centering
    \includegraphics[width=\textwidth]{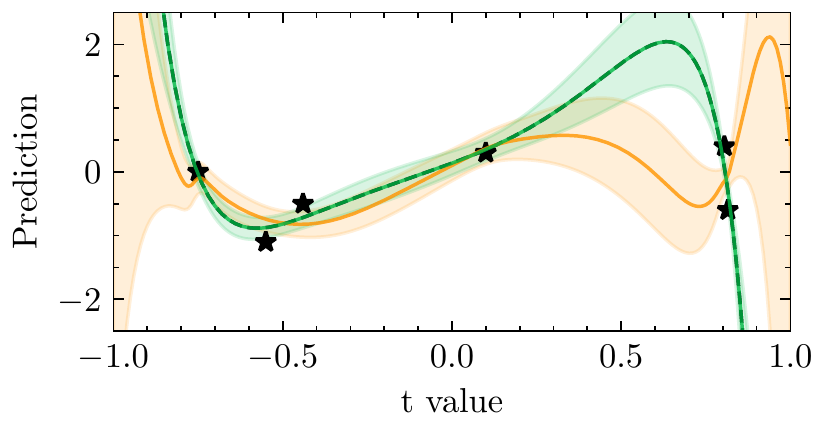}
    \vspace{-0.5cm}
    \caption{$M=20$}
    \label{fig:m_20}
\end{subfigure}
\begin{subfigure}{0.49\textwidth}
\centering
    \includegraphics[width=\textwidth]{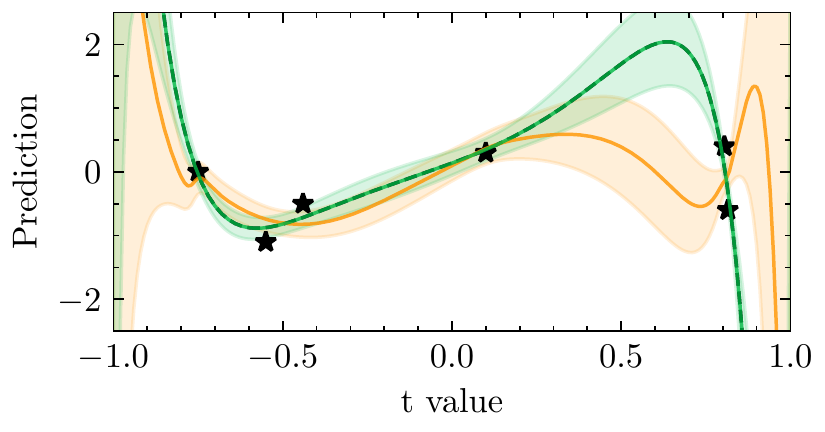}
    \vspace{-0.5cm}
    \caption{$M=100$}
    \label{fig:m_100}
\end{subfigure}
\caption{The LpNML (orange line) and Bayesian (green line) learners with different model degrees.}
\end{figure}

\section{Additional synthetic data results}
We present additional results for synthetic data as was described in \secref{sec:exp_polinomial}. The results here include more model degrees $M$. \Figref{fig:m_2} and \figref{fig:m_5} show the prediction for the under-parameterized case where $M < N$: The LpNML and Bayesian have a similar prediction and both have small variance.
For $M=20$ and $M=100$ as demonstrated in \figref{fig:m_20} and \figref{fig:m_100} respectively, in the absence of training data the LpNML prediction is closer to 0 than the Bayesian prediction.

\section{Distribution shift datasets}
In this section, we elaborate on the train-test split that was performed to the distribution-shift benchmark by~\cite{tripuraneni2020single}.

The prediction task for the Wine dataset is to predict acidity levels: The training data comprised only red wines with a test set contains only white wines.
In the Parkinsons dataset, the task is to predict a jitter index. This set was split into train and test based on the age feature of patients: Age less than 60 for the train set and greater than 60 for the test set.
For the Fertility dataset, the task is to predict the fertility of a sample. The train contains subjects who are younger than 36 and the test set contains subjects older than 36.
Finally, for the Fires dataset, where the task is to predict the burned area of forest fires that occurred in Portugal during a roughly year-long period, the split was done into train/test based on the month feature of the fire: Those occurring before September for a train set and those after September for the test set.
The Triazines dataset does not include a distribution shift and was randomly split.

In~\Tableref{tab:dist_shift_metadata}, we include further information on these datasets and \figref{fig:pca_3_4} presents the PCA third and forth components.

\begin{figure}[bth]
    \centering
    \includegraphics[width=\textwidth]{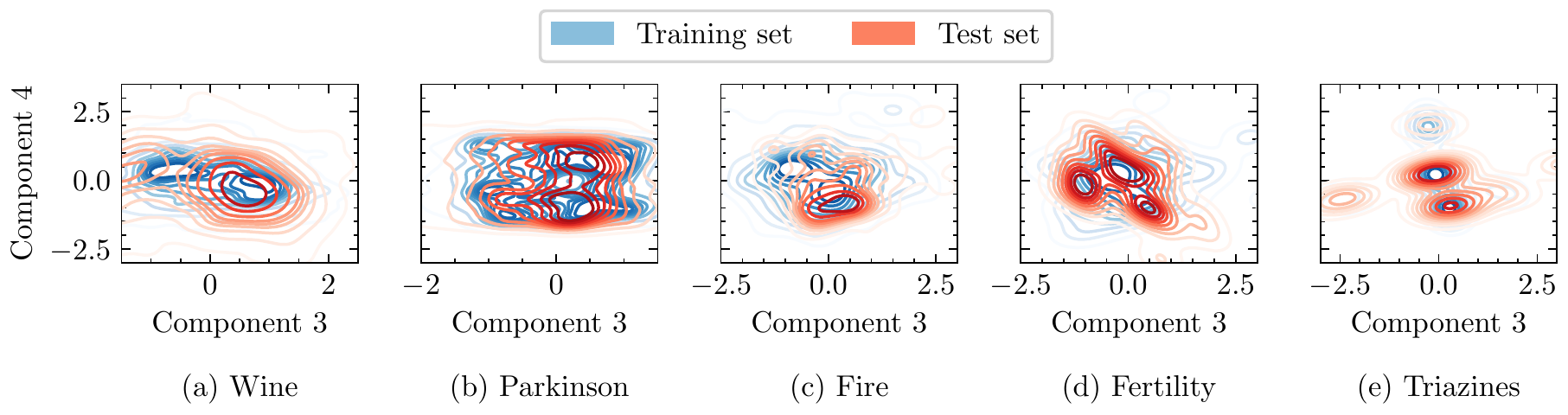}
    \vspace{-0.6cm}
    \caption{PCA for the distribution-shift benchmark: components 3 and 4.}
    \label{fig:pca_3_4}
\end{figure}

\begin{table}[tbh]
\centering
\caption{Distribution shift dataset characteristics.}
\label{tab:dist_shift_metadata}
\begin{tabular}{cccc}
\toprule
Dataset & M & Training set size & Test set size \\
\toprule
Wine        &   8   & 69    & 31            \\
Parkinson   & 10    & 320   & 197          \\
Fire        & 17    & 1877  & 3998         \\
Fertility   & 11    & 4898  & 1599         \\
Triazines   & 60    & 139   & 47          \\
\bottomrule
\end{tabular}
\end{table}

\end{document}